\newcommand{\distas}[1]{\mathbin{\overset{#1}{\kern\z@\sim}}}%
\newsavebox{\mybox}\newsavebox{\mysim}
\newcommand{\distras}[1]{%
	\savebox{\mybox}{\hbox{\kern3pt$\scriptstyle#1$\kern3pt}}%
	\savebox{\mysim}{\hbox{$\sim$}}%
	\mathbin{\overset{#1}{\kern\z@\resizebox{\wd\mybox}{\ht\mysim}{$\sim$}}}%
}
\def \xx {\mathbf{x}}
\def \ww {\mathbf{w}}
\def \X  {\mathcal{X}}
\def \Y  {\mathcal{Y}}
\def \R  {\mathcal{R}}
\def \D {\mathcal{D}}
\title[Robust Decision Trees]{On the Robustness of Decision Tree Learning under Label Noise}
  \author{\Name{Aritra Ghosh} \Email{aritraghosh.iem@gmail.com}\\
 \addr Microsoft India (R \& D) Pvt. Ltd., Bangalore
\AND
\Name{Naresh Manwani} \Email{nareshmanwani@gmail.com}\\
\addr Microsoft India (R \& D) Pvt. Ltd., Bangalore
\AND
\Name{P. S. Sastry} \Email{sastry@ee.iisc.ernet.in}\\
\addr Electrical Engineering, Indian Institute of Science, Bangalore}
\begin{document}
	
	\maketitle
	
	\begin{abstract}
		In most practical problems of classifier learning, the training data suffers from the label noise. Hence, it is important to understand how robust is a learning algorithm to such label noise. 
This paper presents some theoretical analysis to show that many popular decision tree algorithms are robust to symmetric label noise under large sample size. We also present some sample complexity results which provide some bounds on the sample size for the robustness to hold with a high probability. Through extensive simulations we illustrate this robustness. 
	\end{abstract}
	\begin{keywords}
		Robust learning, Decision trees, Label noise
	\end{keywords}
	
	\section{Introduction}
	Decision tree is among the most widely used machine learning approaches \citep{Wu2007}.
Interpretability, applicability to all types of features, less demands on data pre-processing and scalability 
are some of the reasons for its popularity. In general, decision tree is learnt in a top down greedy fashion where, at each node, a split rule is learnt by minimizing some objective function. 
	
	For learning a decision tree classifier, we make use of labeled training data.
	When the class labels in the training data may be incorrect, it is referred to as label noise. Subjectivity and other errors in human labeling, measurement errors, insufficient feature space are some of the main reasons behind label noise. In many large data problems, labeled samples are often obtained through crowd sourcing and the unreliability of such labels is another reason for label noise. 
Learning from positive and unlabeled samples can also be cast as a problem of learning under label noise \citep{du2014analysis}. Thus, learning classifiers
	in the presence of label noise is an important problem \citep{frenay2014classification}. 
	It is generally accepted that among all the classification methods, decision tree is probably closest to {\lq off-the-shelf\rq} method which has all the desirable properties including robustness to outliers \citep{hastie2005elements}.
	
While there are many results about generalization bounds for decision trees \citep{mansour2000generalization,kearns1998fast}, not many theoretical results are known about the robustness of decision tree learning in presence of label noise. 
It is observed that label  noise in the training data increases size of the learnt tree; detecting and removing noisy examples improves the learnt tree \citep{brodley1999identifying}.
Recently, \cite{nettleton2010study} empirically studied robustness of different classifiers under label noise. While decision tree learning is better than SVM or logistic regression in terms of robustness to label noise, it is also seen that naive Bayes is more robust than decision trees.
In this paper, we present a theoretical study of such robustness properties of decision trees. 

	Recently, many analytical results are reported on robust learning of classifiers, using the framework of risk minimization.
The robustness or noise tolerance of risk minimization depends on the loss function used. \cite{long2010random} proved that any convex potential loss is not robust to uniform or symmetric label noise. Another result is that some of the standard convex losses are not robust to symmetric label noise while the 0-1 loss is  \citep{manwani2013noise}. 
It is noted by \cite{du2014analysis} that convex surrogates losses are not good for learning from positive and unlabeled data. A general sufficient condition on the loss function for risk minimization to be robust is derived in  \citep{ghosh2015making}. The 0-1 loss, sigmoid loss and ramp loss are shown to satisfy this condition while convex losses such as hinge loss (used in SVM) and the logistic loss do not satisfy this condition. 
	Interestingly, it is possible to have a convex loss (which is not a convex potential) that satisfies this sufficient condition and the corresponding risk minimization essentially amounts to a highly regularized SVM  \citep{van2015learning}. Robust risk minimization strategies under the so called class-conditional (or asymmetric) label noise are also proposed  \citep{natarajan2013learning,DBLP:conf/colt/ScottBH13}. Some sufficient conditions for robustness of risk minimization under 0-1 loss, ramp loss and sigmoid loss when the training data is corrupted with most general non-uniform label noise are also presented in \citep{ghosh2015making}. None of these results are applicable for decision trees because the popular decision tree learning algorithms cannot be cast as 
risk minimization.

	In this paper, we analyze learning of decision trees under label noise. We consider some of the popular impurity function based methods for learning of decision trees. 
We show, in the large sample limit, that under symmetric or uniform label noise the split rule that optimizes the  objective function under noisy data is the same as that under noise-free data. 
We explain how this results in the learning algorithm being robust to label noise, under the assumption that the number of samples at every node is large. We also derive some sample complexity bounds to indicate how large a sample we need at a node. We also explain how these results indicate robustness of random forest also. 
We present empirical results to show that trees learnt with noisy data give accuracies that are comparable with those learnt with noise-free data. 
	We also show empirically that the random forests algorithm is robust to label noise. 
For comparison we also present results obtained with SVM algorithm. 
	
	\section{Label Noise and Decision Tree Robustness} 
	\label{sec:problem}
	In this paper, we only consider binary decision trees for binary classification. 
	We use the same notion of noise tolerance as in \citep{manwani2013noise,van2015learning}. 
	\subsection{Label Noise}
	Let $\X \subset \R^d$ be the feature space 
and let $\Y=\{1,-1\}$ be the class labels. 
	Let $S=\{(\xx_1,y_{\xx_1}),(\xx_2,y_{\xx_2}),\ldots, (\xx_N,y_{\xx_N})\} \in (\X \times \Y)^N$ be the 
	{\em ideal} noise-free data drawn {\em iid} from a fixed but unknown distribution $\mathcal{D}$ over $\X \times \Y$. The learning algorithm does not have access to this data. 
	The noisy training data given to the algorithm is $S^{\eta} = \{(\xx_i,\tilde{y}_{\xx_i}),i=1,\cdots,N\}$, 
	where $\tilde{y}_{\xx_i}=y_{\xx_i}$ with probability $(1-\eta_{\xx_i})$ and $\tilde{y}_{\xx_i}=-y_{\xx_i}$ with 
	probability $\eta_{\xx_i}$. As a notation, for any $\xx$, $y_{\xx}$ denotes its `true' label while $\tilde{y}_{\xx}$ denotes the noisy label. Thus, $\eta_{\xx} = \Pr[y_{\xx} \neq \tilde{y}_{\xx} \; | \; \xx]$.  We use $\mathcal{D}^{\eta}$ to denote the joint probability distribution of $\xx$ and $\tilde{y}_{\xx}$. 
	
	We say that the noise is {\em uniform} or {\em symmetric} if $\eta_{\xx}=\eta, \; \forall \xx$. Note that, under symmetric noise, a sample having wrong label is independent of the feature vector and the `true' class of the sample. 
	Noise is said to be {\em class conditional} or {\em asymmetric} if $\eta_{\xx}=\eta_+, \;$ 
for all patterns of class $+1$ 
	and $\eta_{\xx}=\eta_-,\;$
for all patterns of class $-1$. 
When noise rate $\eta_{\xx}$ is a general function of $\xx$, it is termed as {\em non-uniform} noise. Note that the value of $\eta$ is unknown to the learning algorithm. 
	
	\subsection{Criteria for Learning Split Rule at a Node of Decision Trees}
	\label{sec:crit_split}
	Most decision tree learning algorithms grow the tree in top down fashion starting with all training data at the root node. At any node, the algorithm selects a split rule to optimize a criterion and uses that split rule to split the data into the left and right children of this node; then the same process is recursively applied to the children nodes till the node satisfies the criterion to become a leaf. Let ${\cal F}$ denote a set of split rules. Suppose, a split rule $f \in {\cal F}$ at a node $v$, sends a fraction $a$ of the samples at $v$ to the left child $v_l$ and the remaining fraction $(1-a)$ to the right child $v_r$. Then many algorithms select a $f \in {\cal F}$ to maximize a criterion 
\begin{equation}
	 C(f) = G(v) - (a G(v_l) + (1-a) G(v_r))
\label{eq:criterion} 
\end{equation}
	where $G(\cdot)$ is a so called impurity measure. 
	There are many such impurity measures. Of the samples at any node $v$, suppose a fraction $p$ are of positive class and a fraction $q = (1-p)$ are of negative class. Then the gini impurity is defined by $G_{\mbox{Gini}}=2pq$ \citep{cart84}; entropy based impurity is defined as $G_{\mbox{Entropy}}=-p\log p-q\log q$ \citep{quinlan1986induction}; and misclassification impurity is defined as $G_{\mbox{MC}}=\min\{p,q\}$. 
	Often the criterion $C$ is called the {\em gain}. Hence, we also use $\mbox{gain}_{\mbox{Gini}}(f)$ to refer to $C(f)$ when $G$ is $G_{\mbox{Gini}}$ and similarly for other impurity measures. 
	
	A split criterion different from impurity is twoing rule, first proposed by \cite{cart84}. Consider a split rule $f$ at a node $v$. Let $p_l$ $(p_r)$, $q_l$ $(q_r)$ be the fraction of positive and negative class samples at the left (right) child $v_l$ $(v_r)$. (We have, $a p_l + (1-a)p_r=p,\quad a q_l + (1-a)q_r =q$, $p$ and $q$ are the fractions for parent node $v$). Then twoing rule selects $f\in {\cal F}$ which maximizes
	$G_{\mbox{Twoing}}(f)=a(1-a)[|p_l-p_r|+|q_l-q_r|]^2/4$.

	\subsection{Noise Tolerance of Decision Tree}
By noise tolerance we desire the following. A decision tree learnt with noisy labels in training data should have the same test error (on noise-free test set) as that of the tree learnt using noise-free training data. 
	One way of achieving such robustness is if the decision tree learning algorithm learns the same tree in presence of label noise as it would learn with noise free data.\footnote{For simplicity, we do not consider pruning of the tree.} Since label noise is random, on any specific noise-corrupted training data, the tree learnt would  also be random. Hence, we say the learning method is robust if, in the limit as training set size goes to infinity, the algorithm learns the same tree with noisy as well as noise-free training data.   We then argue that this implies we learn the same tree  (with a high probability) if given sufficient number of samples. We also provide sample complexity results for this.
Below, we formalize this notion. 
	\begin{definition}
		\label{RobustSplit}
		A split criterion $C$ is said to be {\em noise-tolerant} if
		\[\arg\min_{f\in \mathcal{F}} C(f)=\arg\min_{f\in \mathcal{F}} C^{\eta}(f)\]
	\end{definition}
	where $C(f)$ is the value of the split criterion $C$ for a split rule $f \in \mathcal{F}$ on noise free data  and $C^{\eta}(f)$ is the value of the criterion function for $f$ on noisy data, in the limit as the data size goes to infinity. 

Let the decision tree learnt from training sample $S$ be represented as $LearnTree(S)$ and let the classification of any $\xx$ by this tree be represented as $LearnTree(S)(\xx)$. 
	\begin{definition}
		A decision tree learning algorithm $LearnTree$ is said to be noise-tolerant if the probability of misclassification, under the noise-free distribution, of the tree learnt with noisy samples is same as that learnt with noise-free samples. That is, 
		\[P_{\D}(LearnTree(S)(\xx)\neq y_{\xx}) =P_{\D}(LearnTree(S^{\eta})(\xx)\neq y_{\xx})\]
	\end{definition}
	Note that for the above to hold it is sufficient if $LearnTree(S)$ is same as $LearnTree(S^{\eta})$. 	

	\section{Theoretical Results}
Robustness of decision tree learning requires the robustness of the split criterion at each non-leaf node and 
robustness of the labeling rule at each leaf node. We consider each of these in turn. 
	\subsection{Robustness Of Split Rules}
	\label{sec:split_rules}

As mentioned earlier, most decision tree algorithms select a split rule, $f$, by maximizing $C(f)$ defined by 
(\ref{eq:criterion}). Hence we are interested in comparing, for any specific $f$, the value of $C(f)$ with its  value, in the large sample limit, when labels are flipped under symmetric label noise. 

Let the noise-free samples at a node $v$ be $\{(\xx_i, y_i), \; i=1, \cdots, n\}$. Under label noise, the samples at this node would become $\{(\xx_i, \tilde{y}_i), \; i=1, \cdots, n\}$. Suppose in the noise-free case a split rule $f$ sends $n_l$ of these $n$ samples to the left child, $v_l$, and $n_r = n - n_l$ to right child, $v_r$. Note that a split rule is a function of only the feature vector. (For example, in an oblique decision tree the split rule could be: send a $\xx$ to left child if $\ww^T\xx + w_0 >0$). Since the split rule depends only on the feature vector $\xx$ and not the labels, the points that go to $v_l$ and $v_r$ would be the same for the noisy samples also. Thus, $n_l$ and $a = n_l/n$ would be same in both cases. However, what changes with label noise are the class labels on examples and   hence the number of examples of different classes at a node. 

Let $n^+$ and $n^- = n - n^+$ be the number of samples of the two classes at node $v$ in the noise-free case. Similarly, let $n_l^+$ and $n_l^- = n_l -n_l^+$ be the number of samples of the two classes at $v_l$ and define $n_r^+, n_r^-$ similarly. Let the corresponding quantities in the noisy case be $\tilde{n}^+, \tilde{n}^-, \tilde{n}_l^+, \tilde{n}_l^-$ etc. 
Define random variables, $Z_i, \; i=1, \cdots, n$ by $Z_i = 1$ if $\tilde{y}_i \neq y_i$ and $Z_i = 0$ otherwise. 
Thus, $Z_i$ are indicators of whether or not label on the $i^{th}$ example is corrupted. By definition of symmetric label noise, $Z_i$ are {\em iid} Bernoulli random variables with expectation $\eta$. 

Let $p = n^+/n, q= n^-/n = (1 - p)$ be the fractions of the two classes at $v$ under noise-free samples. Let $p_l, q_l$ and $p_r, q_r$ be these fractions for $v_l$ and $v_r$. Let the corresponding quantities for the noisy samples case be $\tilde{p}, \tilde{q}, \tilde{p}_l, \tilde{q}_l$ etc. Let $p^{\eta}, q^{\eta}$ be the values of $\tilde{p}, \tilde{q}$ in the large sample limit and similarly define $p^{\eta}_l, q^{\eta}_l$ ,$p^{\eta}_r, q^{\eta}_r$.

The value of $\tilde{n}^+$ is the number of $i$ such that $\tilde{y}_i = +1$. Similarly, the value of $n_l^{+}$ would be the number of $i$ such that $\xx_i$ is in $v_l$ and $\tilde{y}_i = +1$. Hence we have
\begin{equation}
\tilde{p} = \frac{\tilde{n}^+}{n} = \frac{1}{n} \left(\sum_{i: \tilde{y}_i = +1} \; 1\right)  = \frac{1}{n} \left(\sum_{i : y_i = +1} (1 - Z_i) \; + \; \sum_{i : y_i = -1} Z_i \right)
\label{eq:n-tilde-+}
\end{equation}
\begin{equation}
\tilde{p}_l = \frac{\tilde{n}^+_l}{n_l} = \frac{1}{n_l} \left( \sum_{i: \xx_i \in v_l, \tilde{y}_i = +1} \; 1 \right)  =  \frac{1}{n_l} \left(\sum_{i : \xx_i \in v_l, y_i = +1} (1 - Z_i) \; + \; \sum_{i : \xx_i \in v_l, y_i = -1} Z_i \right)
\label{eq:n-l-tilde-+}
\end{equation}

All the above expressions involve sums of independent random variables. Hence the values of the above quantities in the large sample limit can be calculated, by laws of large numbers, by essentially replacing each $Z_i$ by its expected value. Thus, from the above, we get
\begin{equation}
 p^{\eta}  = p(1 - \eta) + q \eta = p(1 - 2 \eta) + \eta; \ \  p_l^{\eta} = p_l(1 - \eta) + q_l \eta = p_l(1 - 2\eta) + \eta
\label{eq:p-eta}
\end{equation}

We emphasize here that, under symmetric label noise, the corruption of label is independent of feature vector and true label and thus we have 
$\Pr[Z_i = 1] = \Pr[Z_i=1 | y_i] = \Pr[Z_i =1 | \xx_i \in B, y_i] = \eta$, for any subset $B$ of the feature space. We have used this fact in deriving the eq.(\ref{eq:p-eta}). 
Comparing the expressions for $p^{\eta}$ and $p_l^{\eta}$, we see that, essentially, at any node (in the large sample limit) the fraction of examples whose labels are corrupted is the same. This is intuitively clear because under symmetric label noise the corruption of class label does not depend on the feature vector. 

To find the large sample limit of criterion $C(f)$ under label noise, we need values of the impurity function in the large sample limit which in turn needs $p^{\eta}, q^{\eta}, p^{\eta}_l$ etc. which are as given above. 
For example, the Gini impurity is given by $G(v) = 2p q$ for the noise free case. For the noisy sample, its value can be written as $\tilde{G}(v) = 2\tilde{p} \tilde{q}$. Its value in the large sample limit would be 
$G^{\eta}(v) = 2p^{\eta} q^{\eta}$. Another way  this can be seen is as
follows. Using eq.(\ref{eq:n-tilde-+}) one can show that
$E_{\eta}[\tilde{p} \tilde{q}] = p^{\eta}q^{\eta}-\frac{\eta (1 -
	\eta)}{n}$ which is $p^{\eta}q^{\eta}$ as $n$ goes to infinity.

Using the above we can now prove the following theorem about robustness of split criteria. 

	\begin{theorem}
		\label{theo_spli}
		Splitting criterion based on gini impurity, mis-classification rate and twoing rule are noise-tolerant (as per definition~1) to symmetric label noise given $\eta \neq 0.5$.
	\end{theorem}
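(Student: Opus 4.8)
The plan is to show, for each of the three criteria, that its large-sample value on noisy data is a \emph{positive scalar multiple} of its value on noise-free data, uniformly over all split rules $f$. Since a positive rescaling leaves the optimizing $f$ unchanged, the equality of $\arg$-optima in Definition~\ref{RobustSplit} follows immediately whenever the scale factor is nonzero, which is exactly the condition $\eta \neq 0.5$. The whole argument rests on the affine noise map on class fractions already established in eq.~(\ref{eq:p-eta}): writing $\beta = 1 - 2\eta$, we have $p^{\eta} = \beta p + \eta$ and likewise $q^\eta = \beta q + \eta$, $p_l^\eta = \beta p_l + \eta$, $p_r^\eta = \beta p_r + \eta$, while the split fraction $a$ is unaffected by noise.

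For the Gini impurity I would substitute $p^\eta, q^\eta$ into $G_{\text{Gini}} = 2pq$ and expand, using $p+q=1$, to obtain the exact identity $G^\eta_{\text{Gini}} = \beta^2 G_{\text{Gini}} + 2\eta(1-\eta)$. Plugging this into $C(f) = G(v) - aG(v_l) - (1-a)G(v_r)$ from eq.~(\ref{eq:criterion}), the additive constant is multiplied by $1 - a - (1-a) = 0$ and therefore cancels, leaving $C^\eta_{\text{Gini}}(f) = \beta^2 C_{\text{Gini}}(f)$. Since $\beta^2 = (1-2\eta)^2 > 0$ for $\eta \neq 0.5$, the optimizing $f$ is preserved. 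The twoing rule is even more direct: using $q_l - q_r = -(p_l - p_r)$ one rewrites $G_{\text{Twoing}}(f) = a(1-a)(p_l - p_r)^2$, and since $p_l^\eta - p_r^\eta = \beta(p_l - p_r)$ while $a$ is fixed, $G^\eta_{\text{Twoing}}(f) = \beta^2 G_{\text{Twoing}}(f)$, again a positive multiple.

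The misclassification impurity $G_{\text{MC}} = \min\{p,q\}$ is where the only real care is needed, because $\min$ does not commute with an affine map of arbitrary sign. Here I would split on the sign of $\beta$. When $\eta < 0.5$ the map $t \mapsto \beta t + \eta$ is increasing, so $\min\{p^\eta, q^\eta\} = \beta \min\{p,q\} + \eta$ and hence $G^\eta_{\text{MC}} = \beta G_{\text{MC}} + \eta$; when $\eta > 0.5$ the map is decreasing, the roles of $\min$ and $\max$ swap, and using $\max\{p,q\} = 1 - \min\{p,q\}$ one gets $G^\eta_{\text{MC}} = -\beta G_{\text{MC}} + (\eta + \beta)$. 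In either case the slope is $|1-2\eta|$ and the intercept is a constant, so feeding this into $C(f)$ makes the constant cancel (again because the weights $1, -a, -(1-a)$ sum to zero) and yields $C^\eta_{\text{MC}}(f) = |1-2\eta|\, C_{\text{MC}}(f)$, a positive multiple for $\eta \neq 0.5$.

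Collecting the three identities, the optimizing split rule under noisy data coincides with that under noise-free data in every case, which is exactly noise tolerance in the sense of Definition~\ref{RobustSplit}. The single point demanding attention is the misclassification case-split on $\mathrm{sign}(1-2\eta)$; everything else is routine substitution, and the $\beta^2$ appearing for Gini and twoing makes those cases insensitive to the sign. It is also worth remarking that this argument pinpoints why entropy is absent from the statement: $-p\log p - q\log q$ does not transform affinely under $p \mapsto \beta p + \eta$, so the clean cancellation that drives the other three proofs is unavailable.
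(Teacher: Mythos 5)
Your proposal is correct and follows essentially the same route as the paper's own proof: substitute the affine noise map of eq.~(\ref{eq:p-eta}) into each impurity, let the additive constant cancel in the gain of eq.~(\ref{eq:criterion}), and conclude that the noisy gain is a positive multiple---$(1-2\eta)^2$ for Gini and twoing, $|1-2\eta|$ for misclassification---of the noise-free gain, so the maximizer is unchanged. Two minor points in your favor: you treat the $\eta>0.5$ misclassification case explicitly where the paper only remarks that ``one can similarly show'' it, and your Gini constant $2\eta(1-\eta)$ is the correct one (the paper's $\eta-\eta^2$ drops a factor of $2$, harmlessly, since the constant cancels in the gain anyway).
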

	\begin{proof}\hfill
		
		As in the above, let $p$ and $q$ be the fractions of the two classes at $v$. For any split $f$, let $a$ be the fraction of points at the left child ($v_l$). Recall from above that  the fraction $a$ is same for noisy and noise-free data.\\
$\bullet$  {\bf Gini Impurity }
			For a node $v$, the gini impurity is $G_{\mbox{gini}}(v)=2pq$.
			Under symmetric label noise, gini impurity (under large sample limit) becomes (using eq.(\ref{eq:p-eta})), 
			\begin{flalign*}
			G_{\mbox{Gini}}^{\eta}(v)&=2p^{\eta}q^{\eta}=2[((1-2\eta)p+\eta)((1-2\eta)q+\eta)]\\
			&=2pq(1-2\eta)^2+(\eta-\eta^2)=G_{\mbox{Gini}}(v)(1-2\eta)^2+(\eta-\eta^2)
			\end{flalign*}	
Similar expressions hold for $G_{\mbox{Gini}}^{\eta}(v_l)$ and $G_{\mbox{Gini}}^{\eta}(v_r)$. 
 The (large sample) value of criterion or impurity gain  of $f$ under label noise can be written as
			\begin{flalign*}			
			\mbox{gain}^{\eta}_{\mbox{Gini}}(f)&=G_{\mbox{Gini}}^{\eta}(v)-[a\;G_{\mbox{Gini}}^{\eta}(v_l)+(1-a)G_{\mbox{Gini}}^{\eta}(v_r)]\\
			&=(1-2\eta)^2[G_{\mbox{Gini}}(v)-a\;G_{\mbox{Gini}}(v_l)-(1-a)\mbox{Gini}(v_r)]\\
			&=(1-2\eta)^2 \mbox{gain}_{\mbox{Gini}}(f)
			\end{flalign*}
Thus for any $\eta\neq 0.5$, if $\mbox{gain}_{\mbox{Gini}}(f^1)>\mbox{gain}_{\mbox{Gini}}(f^2)$, then $\mbox{gain}^{\eta}_{\mbox{Gini}}(f^1) >\mbox{gain}^{\eta}_{\mbox{Gini}}(f^2)$. 
			Which means that a maximizer of impurity gain based on gini index under noise-free samples will be also a maximizer of gain under symmetric label noise, under large sample limit. \\

			
			$\bullet$ {\bf Misclassification rate }
			For node $v$, misclassification impurity is, $G_{\mbox{MC}}(v)=\min\{p,q\}$.
			Under symmetric label noise with $\eta<0.5$, in the large sample limit, value of impurity is, 
(using eq.(\ref{eq:p-eta})), 
			\begin{flalign*}
			G_{\mbox{MC}}^{\eta}(v)&=\min\{p^{\eta},q^{\eta}\}=\min\{(1-2\eta)p+\eta, (1-2\eta)q+\eta\}\\
			&=(1-2\eta)G_{\mbox{MC}}(v)+\eta
			\end{flalign*}	
			In presence of symmetric label noise, expected impurity gain for a split $f$ can be written as
			\begin{flalign*}			
			\mbox{gain}_{\mbox{MC}}^{\eta}(f)&=G_{\mbox{MC}}^{\eta}(v)-[a\;G_{\mbox{MC}}^{\eta}(v_l)+(1-a)G_{\mbox{MC}}^{\eta}(v_r)]\\
			&=(1-2\eta)[G_{\mbox{MC}}^{\eta}(v)-a\;G_{\mbox{MC}}^{\eta}(v_l)-(1-a)G_{\mbox{MC}}^{\eta}(v_r)]\\
			&=(1-2\eta) \mbox{gain}_{\mbox{MC}}(f)
			\end{flalign*}  
			where $(1-2\eta)>0$ because we are considering the case $\eta < 0.5$. 
			When $\eta>0.5$, one can similarly show that $\mbox{gain}^{\eta}_{\mbox{MC}}(f)=(2\eta-1)\mbox{gain}_{\mbox{mc}}(f)$. This completes proof of noise-tolerance of impurity based on misclassification rate. \\
			
			$\bullet$ {\bf Twoing rule } 
			Using the same notation defined Sec~\ref{sec:crit_split} for
			twoing criterion, for a split $f$, objective can be rewritten as
			\begin{flalign*}	
			G_{\mbox{Twoing}}(f)&=\frac{a(1-a)}{4}\big{[}|p_l-p_r|+|q_l-q_r|\big{]}^2=a(1-a)[p_l-p_r]^2
			\end{flalign*}
			When there is symmetric label noise, $p^{\eta}_l=(1-2\eta)p_l+\eta$ and $p_r^{\eta}=(1-2\eta)p_r+\eta$.
			\begin{flalign*}
			G_{\mbox{Twoing}}^{\eta}(f)&=a(1-a)[p_l^{\eta}-p_r^{\eta}]^2=a(1-a)(1-2\eta)^2[p_l-p_r]^2\\
			&=(1-2\eta)^2 G_{\mbox{Twoing}}(f)
			\end{flalign*}
			Thus, the maximizer of twoing rule does not change when there is symmetric label noise. 
	\end{proof}
The above theorem shows that impurity gain (using gini or misclassification rate) based criteria are noise-tolerant for symmetric label noise as per Definition~1.  
	
	\begin{remark} {\bf Impurity based on entropy}
Another popular criterion is impurity gain based on entropy which is not considered in the above theorem. 
The impurity gain based on entropy is not noise-tolerant as per definition~1 as shown by the following counterexample.

Consider a case where a node has $n$ samples ($n$ is large). Suppose, under split rule $f_1$ we get $n_l = n_r = 0.5 n$, $n_l^+ = 0.05 n$ and $n_r^+ = 0.25 n$. Suppose there is another split rule $f_2$ under which we get $n_l = 0.3 n$ and $n_r = 0.7n$ with 
$n_l^+=0.003 n$ and $n_r^+ = 0.297n$. Then it can be easily shown that $\mbox{gain}_{\mbox{Entropy}}(f_1)<\mbox{gain}_{\mbox{Entropy}}(f_2)$; but, under symmetric label noise with $\eta=40\%$, $\mbox{gain}^{\eta}_{\mbox{Entropy}}(f_2)<\mbox{gain}^{\eta}_{\mbox{Entropy}}(f_1)$.

However, we would like to emphasize that the above example may be a non-generic one. 
In large number of simulations we have seen that the split rule that maximizes the criterion is same under noisy and noise-free cases. Thus, impurity gain based on entropy for learning decision trees is also fairly robust to label noise.
	\end{remark}
\subsection{Robustness of Labeling Rule at Leaf Nodes }
	We next consider the robustness of criterion to assign a class label to a leaf node.
	A popular approach is to take majority vote at the leaf node. We prove that, majority voting is robust to symmetric label noise in the sense that (in the large sample limit) the fraction of positive examples would be more under label noise if the fraction of positive examples is higher in noise-free case. We also show that it can be robust to non-uniform noise also under a restrictive condition.  
	\begin{theorem}
		\label{theo_mojority}
		Let $\eta_{\xx} < 0.5,\forall \xx$. (a). Then, majority voting at a leaf node is robust to symmetric label noise. (b). It is also robust to nonuniform label noise if all the points at the leaf node belong to one class in the noise free data.
	\end{theorem}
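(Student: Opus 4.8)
The plan is to reduce both parts to a statement about the large-sample limit of the positive-class fraction at the leaf, since the majority-vote label is determined solely by the sign of (fraction positive minus fraction negative). Majority voting assigns $+1$ exactly when $p > q$, i.e. when $p > \tfrac12$, so it suffices to show that the inequality $p > \tfrac12$ is preserved in both directions when $p$ is replaced by its noisy large-sample limit $p^\eta$.

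For part (a), I would simply reuse the limiting fraction from eq.(\ref{eq:p-eta}): under symmetric noise $p^\eta = (1-2\eta)p + \eta$ and $q^\eta = (1-2\eta)q + \eta$, so subtracting gives $p^\eta - q^\eta = (1-2\eta)(p-q)$. Since $\eta < 0.5$ the factor $(1-2\eta)$ is strictly positive, hence $p^\eta - q^\eta$ has the same sign as $p - q$, equivalently $p^\eta > \tfrac12 \iff p > \tfrac12$. Thus the majority class is unchanged by the noise in the large sample limit, which is exactly the claimed robustness. This part is a one-line sign argument.

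For part (b) the setup is different, because the flip probability now depends on $\xx$ and eq.(\ref{eq:p-eta}) no longer applies. I would instead return to the representation of $\tilde p$ as a normalized sum of indicators, as in eq.(\ref{eq:n-tilde-+}). By hypothesis every point at the leaf has the same true label, say $+1$; then a point keeps the label $+1$ precisely when it is not flipped, so $\tilde p = \tfrac1n \sum_i (1 - Z_i)$, where, conditioned on $\xx_i$, $Z_i$ is Bernoulli with parameter $\eta_{\xx_i}$ and the pairs $(\xx_i, Z_i)$ are independent across $i$. Applying a law of large numbers for independent (non-identically distributed) bounded summands yields $p^\eta = E_{\xx}[\,1 - \eta_{\xx}\,] = 1 - E_{\xx}[\eta_{\xx}]$, the expectation being over $\xx$ restricted to the leaf. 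The crucial step is that pointwise strictness $\eta_{\xx} < \tfrac12$ forces $E_{\xx}[\eta_{\xx}] < \tfrac12$ strictly: since $\tfrac12 - \eta_{\xx} > 0$ almost surely, its expectation cannot vanish. Therefore $p^\eta > \tfrac12$ and majority voting still returns $+1$, matching the noise-free label.

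The main obstacle, and the reason the extra hypothesis is essential in (b), is that once the leaf contains both classes under non-uniform noise the limiting positive fraction becomes a mixture $p^\eta = E_{\xx}[\,p(\xx)(1-\eta_{\xx}) + (1-p(\xx))\eta_{\xx}\,]$, in which high-noise points of the minority class can dominate, and the clean sign-preserving identity of part (a) breaks down; there is then no reason the majority is preserved. Restricting to a pure leaf eliminates the competing class and collapses the claim to the strict-expectation inequality above, which is where I would concentrate the rigor. I would also state explicitly the LLN variant invoked (e.g. Kolmogorov's strong law for independent variables with uniformly bounded variance, valid here since $Z_i \in \{0,1\}$) to justify passing to the limit for non-identically distributed terms.
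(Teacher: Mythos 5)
Your proposal is correct and follows essentially the same route as the paper: part (a) is the identical sign argument $p^{\eta}-q^{\eta}=(1-2\eta)(p-q)$, and part (b) is the same reduction of a pure leaf to the average of $(1-\eta_{\xx_i})$, each exceeding $\tfrac12$. The only difference is cosmetic rigor: the paper simply replaces each $Z_i$ by its mean and bounds the finite average $\frac{1}{n}\sum_i(1-\eta_{\xx_i})>0.5$, whereas you invoke an explicit non-i.i.d. law of large numbers and the strict-expectation inequality $E_{\xx}[\eta_{\xx}]<\tfrac12$, which tightens the paper's informal passage to the large-sample limit without changing the argument.
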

	\begin{proof}\hfill 
		
		Let $p$ and $q=1-p$  be the fraction of positive and negative samples at leaf node $v$.		
		
		(a) Under symmetric label noise, the relevant fractions are
$p^{\eta}=(1-\eta)p+\eta q$ and $q^{\eta}=(1-\eta)q+\eta p$. Thus, $p^{\eta}-q^{\eta}=(1-2\eta)(p-q)$. Since $\eta<0.5$, $(p^{\eta}-q^{\eta})$ will have the same sign as $(p-q)$, proving robustness of the majority voting.
			
			(b)	Let $v$ contain all the points from the positive class. Thus, $p=1, q=0$. Let $\xx_1, \cdots, \xx_n$ be the samples at $v$. Under non-uniform noise (with $\eta_{\xx}<0.5, \forall \xx$), 
			\begin{flalign}
			p^{\eta}=\frac{1}{n}\sum_{i=1}^{n} (1-\eta_{\xx_i})> \frac{0.5}{n}\sum_{i=1}^{n} 1 = 0.5
			\end{flalign}
			Thus, the majority vote will assign positive label to the leaf node $v$. This proves the second part of the theorem.  
	\end{proof}		 
		\subsection{Robustness of Decision Tree Learning Under Symmetric Label Noise : Large Sample Analysis}
		\label{dt_robust} 
		
		We have proved that some of the popular split criteria are noise-tolerant. What we have shown is that the split rule that maximizes the criterion under noise-free samples is same as that which maximizes the value of criterion under symmetric label noise (under large sample limit). 
This means, under large sample assumption, 
the same split rule would be learnt at any node irrespective of whether the labels come from noise-free data or noisy data. (Here we assume for simplicity that there is a unique split rule maximizing the criterion at each node. Otherwise we need some prefixed rule to break ties).\footnote{Here we are assuming that the $\xx_i$ at the node are same in the noisy and noise-free cases. These are same at the root. If in the two cases we learn the same split at the root, then at both its children the samples would be same in the noise and noise-free cases and so on.} 

Our result for leaf node labeling implies that, under large sample assumption, with majority rule a leaf node would get the same label under noisy or noise-free data. To conclude that we learn the same tree, we need to examine the rule for deciding when a node becomes a leaf. If this is determined by the depth of the node or number of samples at the node then it is easy to see that the same tree would be learnt with noisy and noise-free data. In many algorithms one makes a node as leaf if no split rule gives positive value to the gain. This will also lead to learning of the same tree with noisy samples as with noise-free samples, because we showed that the gain under noisy case is a linear function of the gain under noise-free case. 
\begin{remark}{\bf Robustness under general noise:}
In our analysis so far, we have only considered symmetric label noise. In the simplest case of asymmetric noise, namely, class-conditional noise, noise rate is same for all feature vectors of a class though it may be different for different classes. In the risk minimization framework, class conditional noise can be taken care when the noise rates are known (or can be estimated) \citep{natarajan2013learning,DBLP:conf/colt/ScottBH13,ghosh2015making}. We can extend the analysis presented in Sec.3.1 to relate expected fraction of examples of a class in the noisy and noise-free cases using the two noise rates. Thus, if the noise rates are assumed known (or can be reliably estimated) it should be possible to extend the analysis here to the case of class-conditional noise. 
In the general case when noise rates are not known (and cannot be reliably estimated), it appears difficult to establish robustness of impurity based split criteria.
\end{remark}
	
	\subsection{Sample Complexity under Noise}
	We established robustness of decision tree learning algorithms under large sample limit. Hence an interesting question is that of how large the sample size should be for our assertions about robustness 
to hold with a large probability. We provide some sample complexity bounds in this subsection. (Proofs of Lemmas \ref{sample_leaf} and \ref{sample-node} are given in Appendix).  
	\begin{lemma}
		\label{sample_leaf}
		Let leaf node $v$ have $n$ samples.  Under symmetric label noise with $\eta <0.5$, majority voting will not fail with probability at least  $1-\delta$ when $n\geq \frac{2}{\rho^2(1-2\eta)^2}\ln(\frac{1}{\delta})$, where $\rho$ is the difference between fraction of positive and negative samples in the noise-free case.
	\end{lemma}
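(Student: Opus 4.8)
The plan is to reduce the statement to a one-sided concentration inequality for the noisy fraction of positive labels and then invoke Hoeffding's inequality. Without loss of generality I would assume the positive class is the majority in the noise-free data, so that $p > q$ and $\rho = p - q > 0$; the negative-majority case is symmetric and, since $\rho > 0$, there is no exact tie to worry about. Majority voting at the leaf fails precisely when the noisy labels flip the majority, i.e.\ when the noisy fraction of positives $\tilde p$ falls to or below $1/2$. Hence the whole task is to bound $\Pr[\tilde p \le 1/2]$ by $\delta$.

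First I would rewrite $\tilde p = \frac{1}{n}\sum_{i=1}^n W_i$, where $W_i$ is the indicator that the $i$-th sample carries a noisy positive label. Using the decomposition already established in eq.(\ref{eq:n-tilde-+}), we have $W_i = 1 - Z_i$ for the $n^+$ truly-positive samples and $W_i = Z_i$ for the $n^-$ truly-negative samples. Since the $Z_i$ are iid Bernoulli$(\eta)$, the $W_i$ are independent, bounded in $[0,1]$, with $E[\tilde p] = p(1-\eta) + q\eta = p^{\eta} = (1-2\eta)p + \eta$, exactly the large-sample limit of eq.(\ref{eq:p-eta}).

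The key quantitative step is to compute the margin between this mean and the failure threshold: $p^{\eta} - \tfrac12 = (1-2\eta)\bigl(p - \tfrac12\bigr) = \tfrac12 (1-2\eta)\rho$, which is strictly positive because $\eta < 0.5$. Applying the one-sided Hoeffding bound to the independent, $[0,1]$-bounded variables $W_i$ with deviation $t = p^{\eta} - \tfrac12$ gives $\Pr[\tilde p \le \tfrac12] = \Pr[\tilde p - p^{\eta} \le -t] \le \exp(-2 n t^2) = \exp\bigl(-\tfrac{n}{2}(1-2\eta)^2\rho^2\bigr)$. Setting the right-hand side equal to $\delta$ and solving for $n$ yields exactly $n \ge \frac{2}{\rho^2(1-2\eta)^2}\ln(1/\delta)$, as claimed.

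The only subtlety worth flagging is that the $W_i$ are independent but not identically distributed: the truly-positive ones have mean $1-\eta$ while the truly-negative ones have mean $\eta$. This is harmless, since Hoeffding's inequality requires only independence and boundedness, not identical distributions, and the mixture is already correctly absorbed into the mean $p^{\eta}$ through $p$ and $q$. I would also remark that treating the tie $\tilde p = 1/2$ as a failure only strengthens the bound, so no additional care is needed there; this is really the entire content of the argument, and no genuine obstacle arises beyond identifying the correct margin $\tfrac12(1-2\eta)\rho$.
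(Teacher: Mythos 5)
Your proof is correct and follows essentially the same route as the paper's: both reduce majority-vote failure to a one-sided deviation bound on a sum of independent, bounded (non-identically distributed) label-flip indicators, identify the margin $\tfrac12(1-2\eta)\rho$, and apply Hoeffding's inequality to obtain exactly the stated sample size. The only cosmetic difference is that you track the noisy positive fraction $\tilde p$ directly, while the paper tracks the signed count difference $\tilde n^+ - \tilde n^-$ via $\{0,1\}$- and $\{-1,0\}$-valued flip variables; the two formulations are equivalent and yield the identical exponent $\exp\bigl(-n\rho^2(1-2\eta)^2/2\bigr)$.
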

	
	The sample size needed increases with increasing $\eta$, which is intuitive. It also increases with decreasing $\rho$. The value of $\rho$ tells us the `margin of majority' in the noise-free case and hence when $\rho$ is small we should expect to need more examples in the noisy case.  				
	\begin{lemma}	
\label{sample-node}
		Let there be $n$ samples at a non-leaf node $v$ and given two splits $f_1$ and $f_2$, suppose gain (gini, misclassification, twoing rule) for $f_1$ is higher than that for $f_2$. Under symmetric label noise with $\eta\neq 0.5$, gain from $f_1$ will be higher with probability $1-\delta$ when $n\geq \mathcal{O}(\frac{1}{\rho ^2(1-2\eta)^2}\ln (\frac{1}{\delta}))$, where $\rho$ denotes the difference between gain of the two splits in the noise-free case.
	\end{lemma}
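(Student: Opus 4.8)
The plan is to reduce the finite-sample claim to a concentration statement about the empirical noisy gains around their large-sample limits, and then invoke Theorem~\ref{theo_spli} to control those limits. Write $\widetilde{\mbox{gain}}(f_i)$ for the gain of split $f_i$ computed from the $n$ noisy labels at $v$, and let $\Delta = \widetilde{\mbox{gain}}(f_1) - \widetilde{\mbox{gain}}(f_2)$; the event we must guarantee is $\Delta > 0$. By Theorem~\ref{theo_spli}, the large-sample limit of $\widetilde{\mbox{gain}}(f_i)$ equals $c(\eta)\,\mbox{gain}(f_i)$, where $c(\eta) = (1-2\eta)^2$ for gini and twoing and $c(\eta) = |1-2\eta|$ for misclassification. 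Hence the limit of $\Delta$ is $c(\eta)\rho$, which is strictly positive because $\rho > 0$ and $\eta \neq 0.5$. The whole argument then amounts to showing that $\Delta$ does not fall below this positive limit by more than $c(\eta)\rho/2$, with probability at least $1-\delta$.

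First I would express each empirical gain as a fixed, bounded-Lipschitz function of a constant number of empirical class fractions: the parent fraction $\tilde{p}$ together with the child fractions $\tilde{p}_l,\tilde{p}_r$ of each split (for gini and misclassification), or just the child fractions (for twoing). As the derivation of eq.~(\ref{eq:p-eta}) shows, each such fraction is an average of independent, $[0,1]$-bounded random variables whose expectation is \emph{exactly} its large-sample limit $p^\eta, p^\eta_l, p^\eta_r$; concentrating the fractions first, rather than the products $\tilde{p}\tilde{q}$, neatly sidesteps the $\mathcal{O}(1/n)$ bias term $\eta(1-\eta)/n$ noted after eq.~(\ref{eq:p-eta}). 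I would therefore apply Hoeffding's inequality to each fraction, giving $\Pr[|\tilde{p}-p^\eta|>\epsilon]\le 2e^{-2n\epsilon^2}$ and likewise for the children (using that a non-degenerate split leaves $\Omega(n)$ samples in each child, or equivalently concentrating the full-sample weighted averages $a\tilde{p}_l=\tilde{n}_l^+/n$), and then union-bound over the constantly many fractions appearing in $f_1$ and $f_2$.

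Next I would transfer this concentration to the gain. Since $p\mapsto 2p(1-p)$, $p\mapsto\min\{p,1-p\}$ and $(p_l,p_r)\mapsto(p_l-p_r)^2$ are each Lipschitz on $[0,1]$ with an absolute constant, on the event that every fraction lies within $\epsilon$ of its limit we obtain $|\widetilde{\mbox{gain}}(f_i)-c(\eta)\mbox{gain}(f_i)|\le L\epsilon$ for a universal $L$, and hence $|\Delta-c(\eta)\rho|\le 2L\epsilon$. Choosing $\epsilon=c(\eta)\rho/(4L)$ forces $\Delta\ge c(\eta)\rho/2>0$, and equating the union-bounded failure probability with $\delta$ and solving for $n$ yields $n=\tfrac{1}{2\epsilon^2}\ln(\mathrm{const}/\delta)=\mathcal{O}\!\big(\tfrac{1}{c(\eta)^2\rho^2}\ln(1/\delta)\big)$. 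For misclassification this is exactly the asserted $\mathcal{O}\!\big(\tfrac{1}{\rho^2(1-2\eta)^2}\ln(1/\delta)\big)$; the gini and twoing cases give the same form with $c(\eta)^2=(1-2\eta)^4$ in place of $(1-2\eta)^2$, the discrepancy being a criterion-dependent factor subsumed into the $\mathcal{O}(\cdot)$.

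I expect the main obstacle to be the Lipschitz transfer rather than the concentration itself: because the impurity gains are nonlinear in the class fractions, one must verify that small perturbations of $\tilde{p},\tilde{p}_l,\tilde{p}_r$ induce only proportionally small perturbations of the gain, uniformly over the relevant range, and one must handle splits whose child sizes are an arbitrarily small fraction of $n$, where a child fraction concentrates weakly but enters the gain with a correspondingly small weight that compensates. Tracking how the criterion-specific scale factor $c(\eta)$ from Theorem~\ref{theo_spli} propagates into the exponent of $(1-2\eta)$ is the other point requiring care.
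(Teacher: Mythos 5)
Your overall architecture (Hoeffding's inequality on the class fractions, a union bound, a Lipschitz transfer to the gains, then choosing $\epsilon$ proportional to the noisy gain gap) is essentially the paper's own, including the compensation between a small child's weak concentration and its small weight in the gain. But there is a genuine gap for gini and twoing: you take the Lipschitz constants of $p \mapsto 2p(1-p)$ and $(p_l,p_r)\mapsto (p_l-p_r)^2$ to be an absolute constant $L$, which forces $\epsilon = c(\eta)\rho/(4L) = (1-2\eta)^2\rho/(4L)$ and hence $n = \mathcal{O}\big(\tfrac{1}{\rho^2(1-2\eta)^4}\ln(1/\delta)\big)$. That is not the lemma's bound, and your claim that the extra factor is ``subsumed into the $\mathcal{O}(\cdot)$'' is invalid: the big-O in the statement can hide only absolute constants --- the entire content of the bound is its explicit dependence on $\rho$, $\eta$, $\delta$ --- and the discrepancy $(1-2\eta)^{-2}$ diverges as $\eta \to 0.5$, so your bound is strictly weaker precisely in the regime where the statement has force. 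For two of the three criteria covered by the lemma, the proposal therefore proves a weaker statement than the one asserted.

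The missing idea, which is how the paper obtains the stated exponent, is that the Lipschitz constant must be evaluated where the noisy fractions actually live, not over all of $[0,1]$. Under symmetric noise every limiting fraction is compressed toward $1/2$: $p^{\eta} = (1-2\eta)p + \eta$, so $|p^{\eta}-q^{\eta}| = (1-2\eta)|p-q| \le 1-2\eta$ and $|p_l^{\eta}-p_r^{\eta}| = (1-2\eta)|p_l-p_r| \le 1-2\eta$. Consequently an $\epsilon$-perturbation of the fractions moves the gini quantity by $|\tilde{p}\tilde{q} - p^{\eta}q^{\eta}| \le \epsilon\,|p^{\eta}-q^{\eta}| = \mathcal{O}((1-2\eta)\epsilon)$ (up to a lower-order $\epsilon^2$ term), and similarly the twoing objective by $\mathcal{O}((1-2\eta)\epsilon)$; the paper's bounds $6(1-2\eta)\epsilon$ and $(1-2\eta)\epsilon$ are exactly of this form. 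With this local, noise-dependent sensitivity $L(\eta) = \mathcal{O}(1-2\eta)$, the requirement that the deviation stay below the noisy gap $(1-2\eta)^2\rho$ allows $\epsilon \propto (1-2\eta)\rho$ rather than $(1-2\eta)^2\rho$, and Hoeffding then yields $n = \mathcal{O}\big(\tfrac{1}{\rho^2(1-2\eta)^2}\ln(1/\delta)\big)$ for gini and twoing as claimed. (Your treatment of misclassification impurity, whose Lipschitz constant genuinely is absolute but whose noisy gap is only $(1-2\eta)\rho$, is correct as written.) Your closing remark shows you sensed this was the delicate point, but the proof as proposed commits to the global constant and the invalid big-O absorption, so the gap stands.
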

	
	While these results, shed some lights on sample complexity, we emphasize that these bounds are loose and are obtained using concentration inequalities. Also we want to point out, large sample in leaf implies large sample in non-leaf nodes. In practice, sample size needed is not high. In experimental section, we provide results on how many training samples are needed for robust learning of decision trees on a synthetic dataset.  
	
	\subsection{Noise Robustness in Random Forest} A random forest \citep{breiman2001random} is a collection of randomized tree classifiers. We represent the set of trees as $g_n=\{g_n(\xx,\pi_1),\cdots,g_n(\xx,\pi_m)\}$. Here $\pi_1,\cdots,\pi_m$ are {\em iid} random variables, conditioned on data, which are used for partitioning the nodes. Finally, majority vote is taken among the random tree classifiers for prediction. We denote this classifier as $\bar{g}_n$.
	
	In a {\bf\em purely random forest classifier}, partitioning does not depend on the class labels. At each step, a node is chosen randomly and a feature is selected randomly for the split. A split threshold is chosen uniformly randomly from the interval of the selected feature. This procedure is done $k$ times.
	{\bf\em A greedily grown random forest classifier} is a set of randomized tree classifiers. 
Each tree is grown greedily by improving impurity with some randomization. At each node, a random subset of features are chosen. Tree is grown by computing the best split among those random features only. Breiman's random forest classifier uses gini impurity gain \citep{breiman2001random}.
	\begin{remark}
		A purely random forest classifier/ greedily grown random forest, $\bar{g}_n$, is robust to symmetric label noise with $\eta<0.5$ under large sample assumption.
		
		We need to show each randomized tree is robust to label noise in both cases. 
		In purely random forest, randomization is on the partitions and the partitions do not depend on class labels (which may be noisy). 
We proved robustness of majority vote at leaf nodes under symmetric label noise. Thus, for a purely random forest, $\bar{g}^{\ast\eta}=\bar{g}^{\ast}$. That is, the classifier learnt with noisy labels would be same as that learnt with noise-free samples.
		Similarly for a greedily grown trees with gini impurity measure, we showed that each tree is robust because of both split rule robustness and majority voting robustness. Thus when large sample assumption holds, greedily grown random forest will also be robust to symmetric label noise.
	\end{remark}
	\begin{remark}{\bf Sample complexity of Random forest:}
		\label{remark:sample_rf}
Empirically we observe that, often random forest has better robustness than a single decision tree in finite sample cases.		
		For a classifier, generalization error can be written as, \[\mbox{$error_{gen}$}=\mbox{$error_{bias}$}+\mbox{$err_{variance}$}+\sigma^2_{noise}\]
		Under symmetric label noise, $error_{bias}$ is same for single decision tree as well as random forest. Thus generalization error is controlled by $error_{variance}$. 
If pairwise correlation of each trees is $\rho$ and variance is $\sigma^2$ for each tree, then random forest, consisting $N$ trees, has variance, \citep{hastie2005elements}
		\[error_{variance}=\rho\sigma^2+\frac{1-\rho}{N}\sigma^2\]  
		Intuitively, if a single decision tree is learnt with noisy samples, our results imply that its classification decision on a new point would be same as noise free case in an expected sense. 
If we have many independent decision trees, variance in the classification will decrease. If the decision trees are highly correlated, then the variance reduction might not be significant.
	\end{remark}
	
\section{Empirical Illustration}

	In this section, we illustrate our robustness results for learning of decision trees and random forest. We also present results with SVM. While, SVM has been proved to be non-robust even under symmetric label noise, its sensitivity towards noise widely varies \citep{long2010random,nettleton2010study,manwani2013noise,van2015learning}.
We also provide results on sample complexity for robust learning of decision trees and random forest. 
	\subsection{Dataset Description}
	We used four 2D synthetic datasets. Details are given below. (Here $n$ denotes total number of samples, $p_+, p_-$ represent the class conditional densities, and $\mathcal{U}(\mathcal{A})$ denotes uniform distribution over set $\mathcal{A}$).  
\begin{itemize}
	\item Dataset 1: Checker board 2by2 Pattern: Data uniform over $[0, 2] \times [0, 2]$ and one class region being $([0,1] \times [0, 1]) \cup ([1, 2] \times [1, 2])$ and $n = 30000$
	\item Dataset 2: Checker board 4by4 Pattern: Extension of the above to a $4 \times 4$ grid.  
	\item Dataset 3: Imbalance Linear Data. $p_{+}=\mathcal{U}([0,0.5]\times [0,1])$ and  $p_{-}=\mathcal{U}([0.5,1]\times [0,1])$. Prior probabilities of classes are $0.9$ \& $0.1$, and  $n=40000$. 
	\item Dataset 4: Imbalance and Asymmetric Linear Data. $p_{+}=\mathcal{U}([0,0.5]\times [0,1])$ and  $p_{-}=\mathcal{U}([0.5,0.7]\times [0.4,0.6])$. Prior probabilities are $0.8$ \& $0.2$,  and $n=40000$. 
\end{itemize}
We also present results for $6$ UCI datasets \citep{Lichman:2013}. 
	\subsection{Experimental Setup}
	We used decision tree implementation in scikit learn library \citep{pedregosa2011scikit}. We present results only with gini impurity based decision tree classifier. (We observed that decision trees learnt using twoing rule and misclassification rate have similar performance). 
For random forest classifier (RF) we used scikit learn library. Number of trees in random forest was set to 100. For SVM we used libsvm package \citep{CC01a}. 

In subsection~\ref{exp_sens} we present results to illustrate sample complexity for robust learning where training set size and size of leaf nodes is varied as explained there. 	
	 
In subsection~\ref{exp_sim}, we compare accuracies of decision tree learning, random forest and SVM for which the following setup is used. 
Minimum leaf size is the only user-chosen parameter in random forest and decision trees. For synthetic datasets, minimum samples in leaf node was restricted to 250. For UCI datasets, it was restricted to 50. For SVM, we used linear kernel (l) for Synthetic Datasets 3, 4 and quadratic kernel (p) for Checker board 2by2 data.
In all other datasets we used gaussian kernel (g).
	For SVM, we selected hyper-parameters using validation data. (Validation range for $C$ is $0.01$-$500$ and for $\gamma$ in the Gaussian kernel it is $0.001$-$10$). We used $20\%$ data for testing and $20\%$ for validation. 
	Symmetric label noise was varied from $0\%-40\%$. 
	As synthetic datasets are separable, we also experimented with class conditional noise with the two noise rates for the two classes being $40\%$ and $20\%$. In all experiments, noise was introduced only on training and validation data. Test set was noise free. 

\subsection{Effect of sample size on robustness of learning}
\label{exp_sens}
Here we discuss sensitivity of decision tree learning (under label noise) on sample size. 
We present experimental results on the test accuracy for different sample sizes using the 2by2 checker board data.

To study effect of sample size in leaf nodes, we choose a leaf sample size and learn decision tree and random forest with 
different noise levels. (The training set size is fixed at 20000). We do this for a number of choices for leaf sample size. The test accuracies in all these cases are shown in Figure~\ref{fig:sens}(a). As can be seen from the figure, even when training data size is huge, we do not get robustness if leaf sample size is small. This is in accordance with our analysis (as in Lemma~\ref{sample_leaf}) because minimum sample size is needed for the majority rule to be correct with a large probability. A leaf sample size of 50 seems sufficient to take care of even 30\% noise. As expected, random forest has better robustness. 
	\begin{figure}
		\begin{center}
			\begin{tabular}{cc}
				\includegraphics[scale=.5]{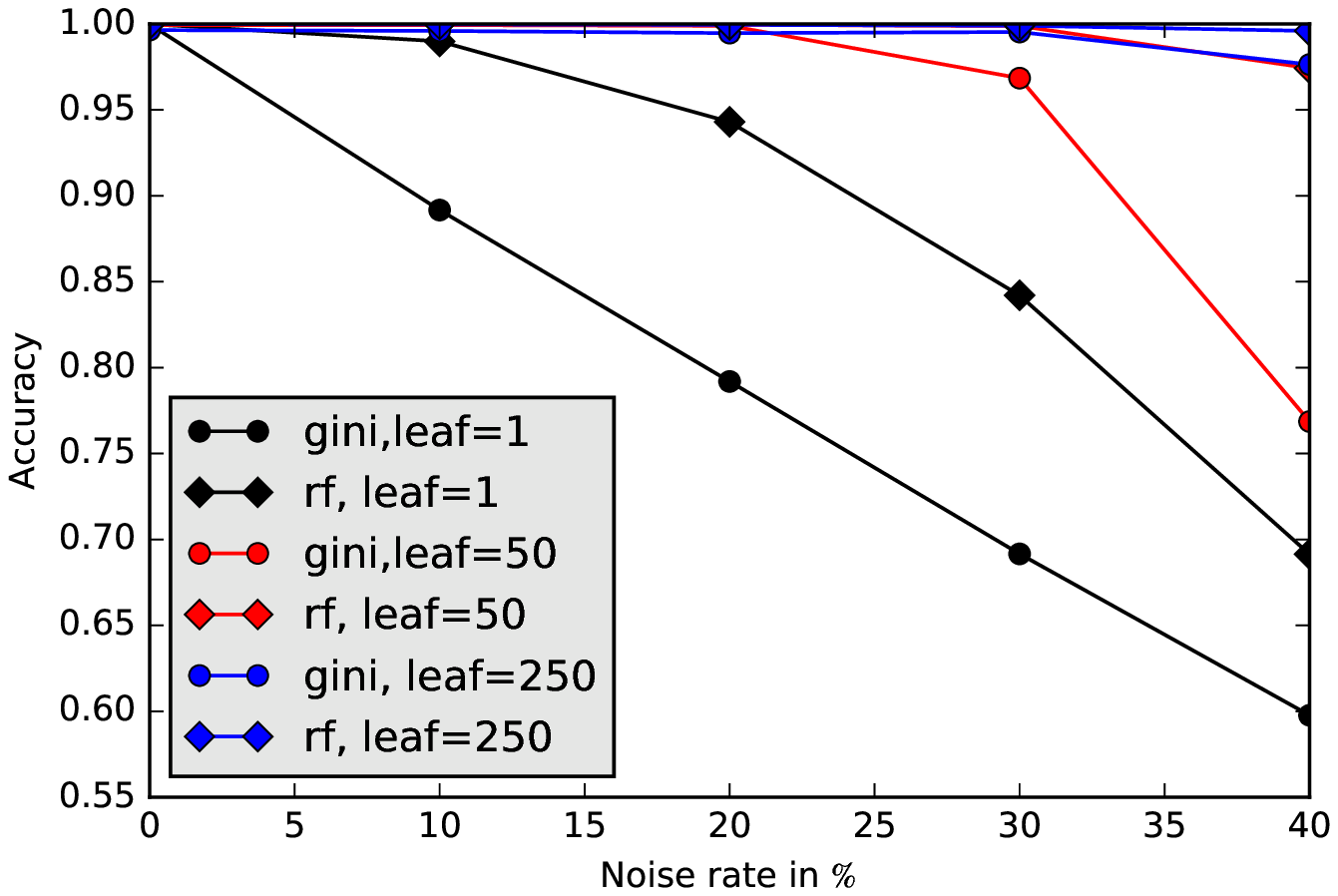}  &  \includegraphics[scale=.5]{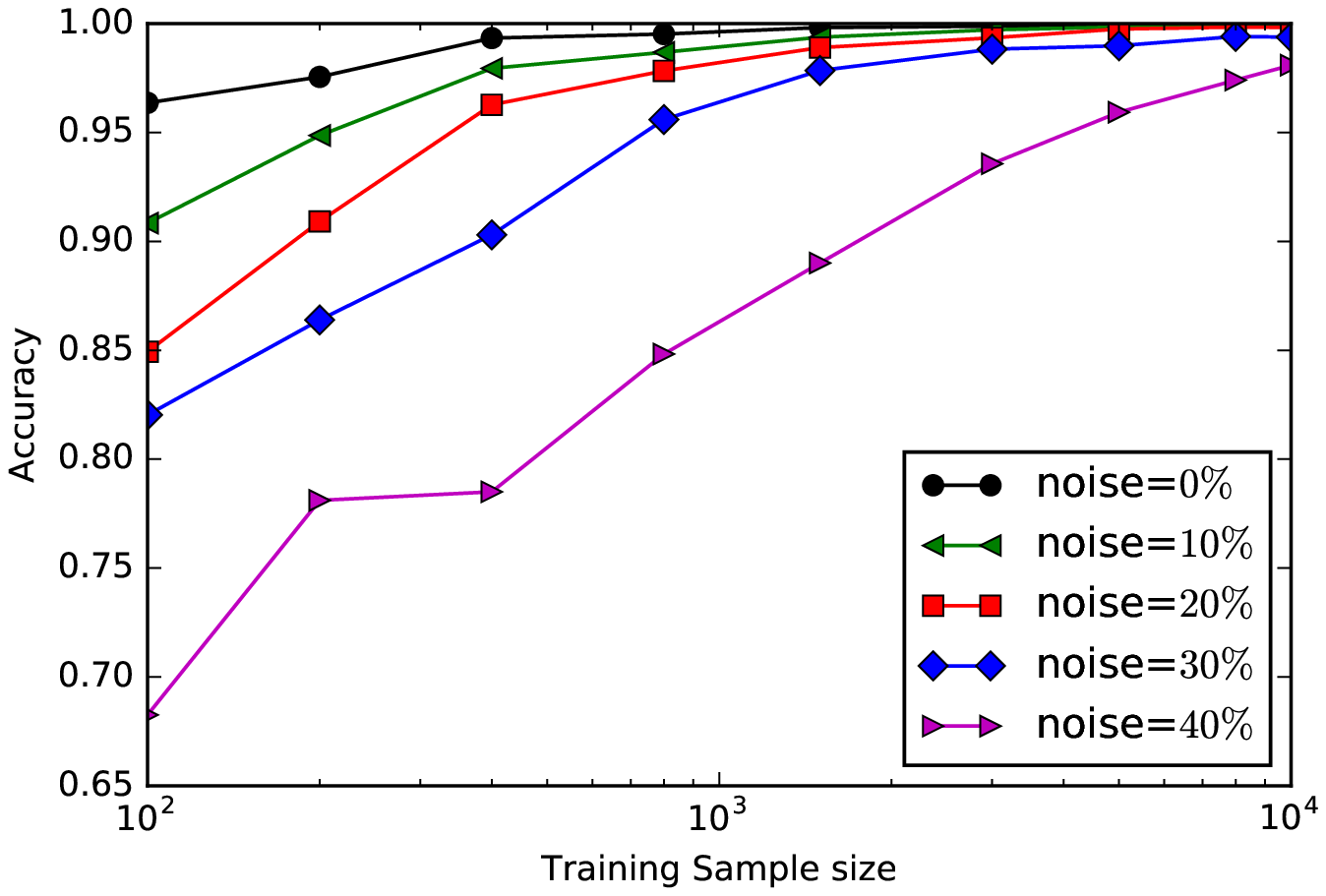}\\
				(a) & (b) \\
			\end{tabular}
			\caption{\footnotesize{For 2by2 Checker Board data (a) Minimum leaf size varied from 1 to 250 for both RF and DT,
					(b)Training Data size varied from $100$ to $10000$ for different noise levels for DT }}
			\label{fig:sens}
		\end{center}
	\end{figure}

Next we experiment with varying the (noisy) training data size. The results are shown in Figure~\ref{fig:sens}(b). It can be seen that with 400/4000 sample size decision tree learnt has good test accuracy ($95\%$) at $20\%/40\%$ noise (the sample ratio is close to $\frac{(1-2\times 0.4)^2}{(1-2\times 0.2)^2}=1/9$ as provided in lemma.~\ref{sample_leaf}). We need larger sample size for higher level of noise.  This is also as expected from our analysis. 
	\subsection{Comparison of accuracies of learnt classifiers}
	\label{exp_sim} 
 The average test accuracy and standard deviation (over 10 runs) on different data sets under different levels of noise are shown in Table~\ref{tab-2} for synthetic datasets and in Table~\ref{tab-3} for UCI datasets. In table~\ref{tab-3} we also indicate the dimension of feature vector ($d$), the number of positive and negative samples in the data ($n^+, n^-$). 
	
	For synthetic datasets the sample sizes are large and hence we expect good robustness. As can be seen from Table~\ref{tab-2}, for noise-free data, decision tree, random forest and SVM have all similar accuracies. However, with 30\% or 40\% noise, the accuracies of SVM are much poorer than those of decision tree and random forest. For example on datasets 3 and 4, the accuracies of decision tree and random forest continue to be 99\% even at 40\% noise while those of SVM drop to about 90\% and 80\% respectively. This illustrates the robustness of decision tree learning as indicated by our analysis. It can be seen that decision tree and random forest are robust to class conditional noise also, even without knowledge about noise rate (as indicated by last column in the table). Our current analysis does not prove this robustness; as remarked earlier, this is one possible extension of the theoretical analysis presented here.


Similar performance is seen on UCI data sets also as shown in Table~\ref{tab-3}. 	
	For breast cancer dataset, accuracy of decision tree also drops with noise while for random forest the drop is significantly less. This is also expected because the total sample size here is less.  Although SVM has significantly higher accuracy than decision tree in 0\% noise, at 40\% noise its accuracy drops more than that of decision tree. In all other data sets also, decision tree and random forest are more robust than SVM as can be seen from the table. 	

As explained earlier, our analysis shows that decision tree learning is robust in large sample case. Thus, though decision tree learning may not be robust to label noise when training set size is small, the robustness improves with increasing training set size. This is demonstrated by our results on synthetic data sets. However, this is not true of a standard algorithm such as SVM. 	For example, Datasets 3 and 4 represent  very simple two dimensional problems. Though we have 40000 samples here, SVM does not learn well under label noise. On the other hand, the accuracies of decision tree and random forest  at 30\% noise are as good as their accuracies at 0\% noise and these accuracies are very high. 

	
	
\begin{table}[t]
	\begin{center}
		\caption{Comparison of Accuracies on Synthetic datasets}
		\label{tab-2}
			\resizebox{\columnwidth}{!}{%
		\begin{tabular}{|p{1.1cm}|p{1.4cm}|p{2cm}|p{2cm}|p{2cm}|p{2cm}|p{2cm}|p{2cm}|p{2cm}|}
			\hline
			\hline
			Data & Method & $\eta=0\%$ & $\eta=10\%$ & $\eta=20\%$ & $\eta=30\%$ & $\eta=40\%$ & $\eta_{+}=40\%$ $\eta_{-}=20\%$  \\\hline
			\multirow{3}{*}{\parbox{1cm}{2$\times$2 CB}} & Gini &99.95 $\pm$0.05 & 99.9 $\pm$0.06 & 99.91 $\pm$0.1 & 99.82 $\pm$0.16 & 98.97 $\pm$0.83 & 99.45 $\pm$0.83\\
			& RF &99.99 $\pm$0.02 & 99.96 $\pm$0.02 & 99.91 $\pm$0.05 & 99.87 $\pm$0.06 & 99.16 $\pm$0.18& 99.11 $\pm$0.45\\
			& SVM(p) &99.83 $\pm$0.12 & 97.38 $\pm$1.21 & 91.88 $\pm$2.65 & 87.96 $\pm$5.52 & 76.42 $\pm$4.43 & 68.78 $\pm$0.97 \\\hline
			\multirow{3}{*}{\parbox{0.5cm}{4$\times$4 CB}} &Gini&99.76 $\pm$0.18 & 99.72 $\pm$0.16 & 99.46 $\pm$0.18 & 98.71 $\pm$0.32 & 95.21 $\pm$1.08 & 97.36 $\pm$1.23\\
			& RF & 99.94 $\pm$0.02 & 99.9 $\pm$0.02 & 99.78 $\pm$0.04 & 99.35 $\pm$0.15 & 96.23 $\pm$0.91 & 95.41 $\pm$0.53\\
			& SVM(g) &99.6 $\pm$0.05 & 98.58 $\pm$0.23 & 97.81 $\pm$0.24 & 96.83 $\pm$0.46 & 92.22 $\pm$2.5 & 91.24 $\pm$0.85\\\hline
			\multirow{3}{*}{\parbox{0.5cm}{Dataset 3}} & Gini &100.0 $\pm$0.01 & 100.0 $\pm$0.01 & 99.99 $\pm$0.01 & 99.99 $\pm$0.02 & 99.92 $\pm$0.07 & 99.92 $\pm$0.18
			\\
			& RF& 100.0 $\pm$0.01 & 100.0 $\pm$0.01 & 99.99 $\pm$0.01 & 99.98 $\pm$0.02 & 99.86 $\pm$0.12 & 99.9 $\pm$0.13\\
			& SVM (l) & 99.89 $\pm$0.04 & 96.65 $\pm$0.26 & 90.02 $\pm$0.3 & 90.02 $\pm$0.3 & 90.02 $\pm$0.3 &  90.1 $\pm$0.31\\\hline
			\multirow{3}{*}{\parbox{1.5cm}{Dataset 4}} & Gini &100.0 $\pm$0.0 & 99.99 $\pm$0.01 & 99.99 $\pm$0.01 & 99.98 $\pm$0.03 & 99.73 $\pm$0.54 & 99.88 $\pm$0.26\\
			& RF & 100.0 $\pm$0.0 & 99.99 $\pm$0.01 & 99.99 $\pm$0.01 & 99.93 $\pm$0.09 & 99.91 $\pm$0.11 & 99.7 $\pm$0.31\\
			& SVM(l) & 99.86 $\pm$0.03 & 99.21 $\pm$0.24 & 96.55 $\pm$4.05 & 79.96 $\pm$0.34 & 79.96 $\pm$0.34 & 79.96 $\pm$0.34\\\hline
			
		\end{tabular}}
	\end{center}
\end{table}
\begin{table}[t]
	\centering
		\caption{Comparison of Accuracies on UCI datasets}
		\label{tab-3}
		\resizebox{\columnwidth}{!}{%
		\begin{tabular}{|p{2.55cm}|p{1.2cm}|p{2.1cm}|p{2.1cm}|p{2.1cm}|p{2.1cm}|p{2.1cm}|p{2.1cm}|}
			\hline
			\hline
			Data $(d, n^{+}, n^{-})$ & Method & $\eta=0\%$ & $\eta=10\%$ & $\eta=20\%$ & $\eta=30\%$ & $\eta=40\%$ \\\hline
			\multirow{3}{*}{\parbox{2.4cm}{Breast Cancer  $(10,239,444)$}} & Gini &92.37 $\pm$1.9 & 92.59 $\pm$2.62 & 90.47 $\pm$3.08 & 90.58 $\pm$2.76 & 83.65 $\pm$7.36 \\ 
			& RF & 96.06 $\pm$1.41 & 96.02 $\pm$1.94 & 96.31 $\pm$1.95 & 94.74 $\pm$3.54 & 91.93 $\pm$4.86\\
			& SVM & 96.35 $\pm$1.46 & 95.58 $\pm$2.11 & 95.26 $\pm$2.63 & 92.81 $\pm$3.22 & 83.47 $\pm$13.2\\ \hline		
			
			\multirow{3}{*}{\parbox{1.5cm}{German $(24,300,700)$}} & Gini &72.05 $\pm$4.89 & 69.4 $\pm$4.04 & 71.95 $\pm$2.72 & 68.6 $\pm$3.44 & 65.25 $\pm$6.71\\ 
			& RF & 69.0 $\pm$3.33 & 69.1 $\pm$3.45 & 69.3 $\pm$3.24 & 69.15 $\pm$3.47 & 69.15 $\pm$4.55\\
			& SVM & 75.15 $\pm$3.26 & 71.95 $\pm$2.89 & 72.25 $\pm$4.39 & 66.4 $\pm$4.84 & 60.9 $\pm$8.43\\ \hline
			\multirow{3}{*}{\parbox{1.5cm}{Splice $(60,1648, 1527)$}} & Gini &91.39 $\pm$1.04 & 91.31 $\pm$0.7 & 89.84 $\pm$1.79 & 85.67 $\pm$2.99 & 73.56 $\pm$8.13\\ 
			& RF & 94.57 $\pm$1.47 & 93.87 $\pm$0.91 & 92.98 $\pm$1.4 & 91.84 $\pm$1.12 & 81.92 $\pm$4.04\\
			& SVM & 90.93 $\pm$1.4 & 88.98 $\pm$0.92 & 86.14 $\pm$1.47 & 81.42 $\pm$1.49 & 67.21 $\pm$6.63\\ \hline
			\multirow{3}{*}{\parbox{1.5cm}{Spam $(57,1813,2788)$}} & Gini &88.99 $\pm$1.45 & 89.02 $\pm$1.04 & 87.39 $\pm$2.04 & 84.06 $\pm$3.26 & 79.59 $\pm$3.72\\ 
			& RF & 91.8 $\pm$1.27 & 91.9 $\pm$1.07 & 91.52 $\pm$1.07 & 91.68 $\pm$1.22 & 88.71 $\pm$3.19\\
			& SVM & 89.72 $\pm$1.07 & 86.18 $\pm$1.35 & 83.43 $\pm$1.47 & 77.45 $\pm$2.38 & 69.23 $\pm$3.05\\ \hline
			\multirow{3}{*}{\parbox{2.3cm}{Wine (white) $(11,3258,1640)$}} & Gini &75.44 $\pm$0.98 & 74.31 $\pm$1.43 & 74.64 $\pm$1.4 & 73.58 $\pm$1.46 & 66.64 $\pm$5.09 \\ 
			& RF & 76.58 $\pm$0.8 & 76.17 $\pm$0.96 & 76.23 $\pm$1.25 & 75.51 $\pm$1.52 & 71.14 $\pm$2.37\\
			& SVM &75.62 $\pm$0.7 & 74.39 $\pm$1.3 & 71.64 $\pm$2.23 & 68.52 $\pm$2.53 & 61.54 $\pm$5.25\\ \hline
			\multirow{3}{*}{\parbox{2.3cm}{Magic $(10,12332,6688)$}} & Gini &84.06 $\pm$0.59 & 83.91 $\pm$0.67 & 83.0 $\pm$0.62 & 81.88 $\pm$0.64 & 78.25 $\pm$1.79\\ 
			& RF & 85.81 $\pm$0.25 & 85.79 $\pm$0.43 & 85.64 $\pm$0.37 & 85.26 $\pm$0.44 & 82.72 $\pm$1.18\\
			& SVM & 82.98 $\pm$0.47 & 82.4 $\pm$0.32 & 81.54 $\pm$0.35 & 79.73 $\pm$0.4 & 71.53 $\pm$2.56\\ \hline
			
		\end{tabular}}
\end{table}

	\section{Conclusion}
	
	In this paper, we investigated the robustness of decision tree learning under label noise. In many current applications one needs to take care of label noise in training data. Hence, it is very desirable to have learning algorithms that are not affected by label noise. Since most impurity based top-down decision tree algorithms learns split rules based on fractions of positive and negative samples at a node, one can expect that they should have some robustness. We proved that decision tree algorithms based on gini or misclassification impurity and the twoing rule algorithm are all robust to symmetric label noise. We showed that, under large sample assumption, with a high probability, the same tree would be learnt with noise-free data as with noisy data. We also provided some sample complexity results for the robustness. Through extensive empirical investigations we illustrated the robust learning  of decision tree and  random forest.
	Decision tree approach is very popular in many practical applications. Hence, the robustness results presented in this paper are interesting. All the results we proved are for symmetric noise. Extending these results to class conditional and non-uniform noise is an important direction for future research. 
	\bibliography{acml16}
	
	\appendix
	
	\section{Sample Complexity Bounds}\label{apd:first}
	\begin{proof}[of Lemma 7]
		\hfill		
		
		Let $n^+$ and $n^-$ denote the positive and negative samples at the node under noise-free case. (Note $n= n^+ + n^-$). Without loss of generality assume that positive class is in majority and hence, by definition, $\rho = (n^+ - n^-)/n$. 
Let $\tilde{n}^+$ and $\tilde{n}^-$ be the positive and negative samples under the noisy case. 
		
		Let $X_i$, $i=1, \cdots, n^+$ be random variables with $\Pr[X_i =1] =  1 - \Pr[X_i =0] = \eta$. 
		Let $X_i$, $i=n^+ +1, \cdots, n$ be random variables with $\Pr[X_i = -1] =  1 - \Pr[X_i =0] = \eta$.
		Let $S_n = \sum_{i=1}^n X_i$. Then, under symmetric label noise, we have $\tilde{n}_p - \tilde{n}_n = (n_p - n_n) - 2 S_n = \rho n - 2 S_n$. Also, note that $ES_n = \eta n^+ - \eta n^- = \eta \rho n$. Now we have
		\begin{eqnarray*}
			\Pr[ \tilde{n}^+ - \tilde{n}^- < 0 ] & = & \Pr[\rho n - 2 S_n < 0 ] \\
			& = &    \Pr[2 S_n - 2 ES_n > \rho n (1 - 2 \eta) ] \\
			& \leq & \exp\left(-\frac{\rho^2 n (1 - 2 \eta)^2}{2}\right)
		\end{eqnarray*}
		where the last line follows from hoeffding's inequality. If we want this probability to be less than $\delta$ then we would need $n > \frac{2}{\rho^2 (1 - 2 \eta)^2} \ln(\frac{1}{\delta})$. This completes the proof. 
	\end{proof}
	
	\begin{proof}[Of Lemma 8]
		\hfill  
		
		Lets assume parent node $v$ contains $n$ samples whereas left child $v_l$ (right child $v_r$) contains $n_l=na$ ($n_r=n-na$) samples. 	Note under noise, for a split rule $f$ at node $v$, for both parent as well as child, these numbers remain same as noise free case. For a parent node $v$, suppose, $p$ ($\tilde{p}$) and $q$ ($\tilde{q}$) are the positive and negative fraction under noise-free (noisy) data with $n$ samples. Similarly $p_l$, $q_l$, $\tilde{p}_l$,  $\tilde{q}_l$ ($p_r$, $q_r$, $\tilde{p}_r$,  $\tilde{q}_r$) is defined for left (and right child).
		Thus under symmetric label noise $\eta$, we can write for any node (note, $E_{\eta}(\tilde{p})=p^{\eta}$),
		\begin{equation}
			\label{eq:hoeff}
			\Pr[|\tilde{p}-p^{\eta}|> \epsilon]\leq 2e^{-2n\epsilon^2}
		\end{equation}
		We want to bound how finite sample estimates of different impurity gain differs from the large sample assumption (or the expectation). 
		We use $\epsilon_1$, $\epsilon_2$ and $\epsilon_3$ to denote the finite sample error (from the expectation) for positive fraction in parent, left and right child respectively (note this in turn bounds negative fraction also). 
		We set $\epsilon_1=\epsilon$, $\epsilon_2=\epsilon/\sqrt{a}$ and  $\epsilon_3=\epsilon/\sqrt{1-a}$. The probability can be upper bounded using hoeffding bound in eq. (\ref{eq:hoeff}) as,
		\begin{equation}
			\Pr\Big{[}\big{(}|\tilde{p}-p^{\eta}|\geq\epsilon_1 \big{)} \cup \big{(}|\tilde{p}_l-p_l^{\eta}|\geq \epsilon_2\big{)} \cup \big{(}|\tilde{p}_r-p^{\eta}_r|\geq \epsilon_3\big{)}\Big{]}\leq 2(e^{-2n\epsilon_1^2}+e^{-2n_l\epsilon_2^2}+e^{-2n_r\epsilon_3^2})=6e^{-2n\epsilon^2}
			\label{eq:prob_union}
		\end{equation}
		Note that, this probability does not depend on any split and can be applied to any arbitrary split. Also note, for twoing rule, first term is not required in RHS and LHS.
		Given the complement of this event (lets call it as 
		\lq all fractions are $\epsilon$-accurate\rq$\;$event), we compute how finite sample impurity gain deviates from the large sample limit.\\
		
		$\bullet$ Gini Impurity:
		 For a node $v$, after some simplification, using eq.~\ref{eq:hoeff},\ref{eq:prob_union}, we can bound the finite sample noise estimate as (for gini $\tilde{G}=2\tilde{p}\tilde{q})$, \[|\tilde{p}\tilde{q}-p^{\eta}q^{\eta}|\leq
		|\epsilon (p^{\eta}-q^{\eta})|\]
		Thus we can bound finite noisy sample gain from gini impurity as,
		\begin{flalign*}
			|\hat{\mbox{gain}}_{\mbox{Gini}}^{\eta}(f)-\mbox{gain}_{\mbox{Gini}}^{\eta}(f)|&\leq 2|\epsilon_1(p^{\eta}-q^{\eta})|+2a|\epsilon_2 (p_l^{\eta}-q_l^{\eta})|+2(1-a)|\epsilon_3(p_r^{\eta}-q_r^{\eta})|\\
			&\leq 2(1-2\eta)\Big{[} |\epsilon_1(p-q)|+a|\epsilon_2(p_l-q_l)|+(1-a)|\epsilon_3(p_r-q_r)|\Big{]}\\
			& \leq 2(1-2\eta)[|\epsilon_1(p-q)|+|a\epsilon_2|+|(1-a)\epsilon_3|]
			\leq 6(1-2\eta)\epsilon
		\end{flalign*}	
		
		Under noise free case, we assume the difference of gini gain between two splits is $\rho$. Under noise corrupted signal label, expected difference is $\rho^{\eta}=(1-2\eta)^2\rho$. 
		
		Setting $\epsilon=\rho^{\eta}/12(1-2\eta)=\rho (1-2\eta)/12$ for both the splits in eq.~\ref{eq:prob_union}, we get the upper bound on probability of ordering change as,
		$12e^{-n\rho^2(1-2\eta)^2/72}$.\\
		
		$\bullet${Misclassification Impurity:}
		For misclassification impurity, for a node $v$, we have \[|\min(\tilde{p},\tilde{q})-\min(p^{\eta},q^{\eta})|\leq
		|\epsilon|\] 
		Thus we can bound finite noisy sample gain for misclassification impurity as,
		\begin{flalign*}
			|\hat{\mbox{gain}}_{\mbox{MC}}^{\eta}(f)-\mbox{gain}_{\mbox{MC}}^{\eta}(f)|&\leq |\epsilon_1|+a|\epsilon_2|+(1-a)|\epsilon_3|\\
			& \leq |\epsilon|+|\epsilon\sqrt{a}|+|\epsilon\sqrt{1-a}|\leq 3\epsilon
		\end{flalign*}
		If $\rho$ is the difference in gain in noise free case, under noise, difference in gain becomes, $\rho(1-2\eta)$. Thus we can set $\epsilon=\rho (1-2\eta)/6$ in eq.~\ref{eq:prob_union} for both of the splits to get the probability bound.\\
		
		$\bullet${Twoing Rule:}
		Similarly for twoing rule we bound the gain assuming \lq all fractions are $\epsilon$-accurate\rq$\;$event.
		We get, after simplification,
		\begin{flalign*}
			|\hat{\mbox{G}}_{\mbox{Twoing}}^{\eta}(f)-\mbox{G}_{\mbox{Twoing}}^{\eta}(f)|&
			\leq a(1-a)(|\epsilon_2-\epsilon_3|)(|p_l^{\eta}-p_r^{\eta}|)\\
			&\leq (1-2\eta)(|\epsilon (1-a)\sqrt{a}|+|\epsilon a\sqrt{1-a}|)(|p_l-p_r|)\\
			&\leq\frac{(1-2\eta)}{2}(|\epsilon|+|\epsilon|)\leq (1-2\eta)\epsilon
		\end{flalign*}
		Note $\sqrt{a}\sqrt{1-a}\leq 1/2$. Under noise, difference of gain becomes $(1-2\eta)^2\rho$. Here we can set $\epsilon=\rho(1-2\eta)/2$ to bound the probability of ordering change.\\
		
		Thus for all cases, required sample size in parent node is $n\geq \mathcal{O}(\frac{1}{\rho ^2(1-2\eta)^2}\ln (\frac{1}{\delta}))$			
			\label{apd:second}
		\end{proof}

\end{document}